\documentclass[a4paper]{article}
\usepackage{amsmath,amstext,amsgen,amsbsy,amsopn,amsfonts,amssymb}
\usepackage{etex}
\usepackage{graphics}
\usepackage[pdftex]{graphicx}
\usepackage{epsfig}
\usepackage{hyperref}
\usepackage{amsthm}
\usepackage{float}
\usepackage{bm}
\usepackage{color}
\usepackage{algorithmic}
\usepackage{algorithm}
\usepackage{quoting}
\usepackage{flushend}
\usepackage{balance}
\usepackage{multicol}
\usepackage[resetlabels]{multibib}

\newtheorem{theorem}{Theorem}[section]

\usepackage{booktabs}
\usepackage{esvect}
\usepackage{multirow}
\usepackage{array}
\usepackage{arydshln}
\usepackage{xcolor}
\usepackage{hyperref}

\providecommand{\keywords}[1]{\textbf{\textit{Index terms---}} #1}

\begin{document}
\title{\textbf{Similarity Kernel and Clustering via Random Projection Forests}}

\author{
Donghui Yan$^{\dag}$, Songxiang Gu$^{\ddag}$, Ying Xu$^{\P}$, Zhiwei Qin$^{\$}$
\vspace{0.1in}\\
$^\dag$Mathematics, University of Massachusetts Dartmouth, MA\vspace{0.05in}\\
$^\ddag$Linkedin Inc, Sunnyvale, CA\vspace{0.05in}\\
$^\P$Indigo Agriculture Inc, Boston, MA\vspace{0.05in}\\
$^\$$DiDi Research America, Mountain View, CA\vspace{0.05in}
}

\date{\today}
\maketitle

\begin{abstract}
\noindent
Similarity plays a fundamental role in many areas, including data mining, machine learning, statistics and various 
applied domains. Inspired by the success of ensemble methods and the flexibility of trees, we propose to learn a similarity 
kernel called {\it rpf-kernel} through random projection forests ({\it rpForests}). Our theoretical analysis reveals a highly desirable 
property of rpf-kernel: {\it far-away (dissimilar) points have a low similarity value while nearby (similar) points would have 
a high similarity}, and the similarities have a native interpretation as the probability of points remaining in the same leaf nodes 
during the growth of {\it rpForests}. The learned rpf-kernel leads to an effective clustering algorithm---{\it rpfCluster}. On 
a wide variety of real and benchmark datasets, {\it rpfCluster} compares favorably to K-means clustering, spectral clustering 
and a state-of-the-art clustering ensemble algorithm---Cluster Forests. Our approach is simple to implement and readily adapt 
to the geometry of the underlying data. Given its desirable theoretical property and competitive empirical performance when 
applied to clustering, we expect rpf-kernel to be applicable to many problems of an unsupervised nature or as a regularizer 
in some supervised or weakly supervised settings.    
\end{abstract}

\keywords{Similarity learning, distance metric learning, clustering, rpf-kernel, kernel methods, random projection forests
}

\section{Introduction}
\label{section:introduction}
Similarity measures how similar or closely related different objects are. It is a fundamental quantity that is relevant whenever a distance metric 
or a measure of relatedness is used. Our interest is when the similarity is explicitly used. A wide range of applications use the notion of 
similarity. In data mining, the answer to a database (or web) query is often formulated as similarity search \cite{DongMosesLi2011, 
FriedmanBentleyFinkel1977} where instances similar to the one in the query are returned. More general is the k-nearest neighbor 
search, which has applications in robotic route planning \cite{Kleinbort2015EfficientHM}, face-based surveillance systems 
\cite{Otto2017ClusteringMOArxiv}, anomaly detection \cite{AngiulliPizzu2002,ChandolaBanerjeeKumar2007,RamaswamyRastogiShim2000} etc. 
In machine learning, the entire class of kernel methods \cite{ScholkopfSmola2001,HofmannScholkopfSmola2008} rely on similarity or the 
similarity kernel. The similarity kernel is also an essential part of popular classifiers such as support vector machines \cite{CortesVapnik1995}, 
the broad class of spectral clustering algorithms \cite{Ncut, NgJordan2002, KannanVempala2004,Luxburg2007, YanHuangJordan2009}, 
and various kernalized methods 
such as kernel PCA \cite{Scholkopf1998}, kernel ICA \cite{BachJordan2003}, kernel CCA \cite{FukumizuBachGretton2007}, kernel k-means 
\cite{DhillonGK2004} etc. Another important class of methods that use similarity is clustering ensemble \cite{StrehlGhosh2002,CF} which use 
the similarity to encode how different data points are related to each other in terms of clustering when viewed from many different clustering instances. 
Additionally, the similarity is used as a regularization term that incorporates some latent structures of the data such as the {\it cluster} \cite{ChapelleWeston2003} 
or {\it manifold} assumption \cite{BelkinNiyogiSindhwani2006} in semi-supervised learning, or as regularizing features that capture the locality of 
the data (i.e., which of the data points look similar) \cite{deepTacoma2019}. 
In statistics, the notion of similarity is also frequently used (a significant 
overlap with those used in machine learning). It has been used as a distance metric in situations where the Euclidean distance is no longer appropriate 
or the distance needs to be adaptive to the data, for example in nonparametric density estimation \cite{BickelBreiman1983,MackRosenblatt1979} and 
intrinsic dimension estimation \cite{BickelYan2008,LevinaBickel2005}. Another use is to measure the relatedness between two random objects,
such as the covariance matrix. 
It is also used as part of the aggregation engine to combine data from multiple sources \cite{SNF2014} or multiple views 
\cite{DingShaoFu2018,XuTaoXu2013}.
\\
\\
The benefit of adopting a similarity kernel is immediate. It encodes important property about the data, and allows a unified treatment of many seemingly 
different methods. Also one may take advantage of the powerful kernel methods. However, in many situations, one has to choose a kernel that is 
suitable for a particular application. It will be highly desirable to be able to automatically choose or learn a kernel that would work for the underlying data. In this paper, we explore a data-driven approach to learn a similarity kernel. The similarity kernel will be learned through a versatile tool 
that was recently developed---random projection forests ({\it rpForests}) \cite{rpForests2019}.
\\
\\
{\it rpForests} is an ensemble of random projection trees (rpTrees) \cite{RPTree} with the possibility of projections selection 
during tree growth \cite{rpForests2019}. rpTrees is a randomized version of the popular kd-tree \cite{Bentley1975,FriedmanBentleyFinkel1977}, 
which, instead of splitting the nodes along coordinate-aligning axes, recursively splits the tree along randomly chosen directions. 
{\it rpForests} combines the power of ensemble methods \cite{Bagging,RF,Adaboost,CF, netflix2012} and the flexibility of trees.
As {\it rpForests} uses randomized trees as its building block, accordingly it has several desired properties of trees. 
Trees are invariant with respect to monotonic transformations of the data. 
Trees-based methods are very efficient with a log-linear (i.e., $O(n\log(n))$) average computational 
complexity for growth and $O(\log(n))$ for search where $n$ is the number of data 
points. As trees are essentially recursive space partitioning methods \cite{Bentley1975, RPTree, YanDavis2018},  
data points falling in the same tree leaf node would be close to each other or ``similar". This property is often leveraged 
for large scale computation \cite{DasguptaSinha2015,LiuMooreGray2004,YanHuangJordan2009, rpForests2019} etc.
Now it forms the basis of our construction of the similarity kernel---data points in the same leaf node are likely to be similar 
and dissimilar otherwise. Additionally, as individual trees are rpTrees thus can adapt to the geometry of the data and readily 
overcomes the curse of dimensionality \cite{RPTree}.
\\
\\
Ideally, for a similarity kernel, similar objects should have a high similarity value while low similarity value for dissimilar objects. 
Similarity as produced by a single tree may suffer from the undesirable boundary effect---similar data points may be separated 
into different leaf nodes during the growth of a tree. 
This will cause problem for a similarity kernel for which the similarity of every pair (or most pairs) of points matters. The 
ensemble nature of {\it rpForests} allows us to effectively overcome the boundary effect---by ensemble, data points separated 
in one tree may meet in another; indeed {\it rpForests} 
reduces the chance of separating nearby points exponentially fast \cite{rpForests2019}. Meanwhile, dissimilar or 
far-away points would unlikely end up in the same leaf node. This is because, roughly, the diameter of tree nodes keeps on 
shrinking during the tree growth, and eventually those dissimilar points would be separated if they are far away enough. Thus 
a similarity kernel produced by {\it rpForests} would possess the expected property.
\\
\\
Our main contributions are as follows. First, we propose a data-driven approach to learn a similarity kernel from the data by {\it rpForests}. 
It combines the power of ensemble and the flexibility of trees; the method is simple to implement, and readily 
adapt to the geometry of the data. As an ensemble method, easily it can run on clustered or multi-core computers. 
Second, we develop a theory on the property of the learned similarity kernel: similar objects would have high similarity value while low 
similarity value for dissimilar objects; the similarity values have a native interpretation as the probability of points staying in the same 
tree leaf node during the growth of {\it rpForests}. With the similarity kernel learned by {\it rpForests}, we develop a highly 
competitive clustering algorithm, {\it rpfCluster}, which compares favorably to spectral clustering and a state-of-the-art ensemble clustering 
method.
\\
\\
The remainder of this paper is organized as follows. In Section \ref{section:method}, we give a detailed description of how to produce 
a similarity kernel by {\it rpForests} and a clustering method based on the resulting similarity kernel. This is followed by a little theory 
on the similarity kernel learned by {\it rpForests} in Section~\ref{section:theory}. 
Related work are discussed in Section \ref{section:related}. In Section~\ref{section:evaluation}, we first provide examples to illustrate the
desired property of the similarity kernel and its relevance in clustering, and then present experimental results on a wide variety of real 
datasets for {\it rpfCluster} and its competitors. Finally, we conclude in Section~\ref{section:conclusion}.
\section{Proposed approach}
\label{section:method}
In this section, we will first describe {\it rpForests}, and then discuss how to generate the similarity kernel with {\it rpForests} and
to cluster with the similarity kernel, followed by a brief introduction to spectral clustering \cite{Ncut, NgJordan2002, KannanVempala2004,Luxburg2007}. 
\subsection{Algorithmic description of {\it rpForests}}
Our description of the {\it rpForests} algorithm is based on \cite{rpForests2019}.
Each tree in {\it rpForests} is an rpTree. The growth of an rpTree follows a recursive procedure. It starts by treating the entire data as the 
root node and then split it into two child nodes according to a randomized splitting rule. On each child node, the same 
splitting procedure applies recursively until some stopping criterion is met, e.g., the node becomes too small (i.e., contains too few data points). 
\\
\\
In rpTree, the split of a node is along a randomly generated direction, say $\stackrel{\rightharpoonup}{r}$. Assume the node to split is $W$. 
There are many ways to split node $W$. One choice that is simple to implement is to select a point, say $c$, uniformly at random over the 
interval formed by the projection of all points in $W$ onto $\stackrel{\rightharpoonup}{r}$, denoted by 
$W_{\stackrel{\rightharpoonup}{r}}=\{P_ {\stackrel{\rightharpoonup}{r}}(x)=r \boldsymbol{\cdot} x:~x\in W\}$. The left child is given by 
$W_L = \{x \in W: P_{\stackrel{\rightharpoonup}{r}}(x) < c\}$, and the right child $W_R$ by the rest of points. 
Another popular choice is to choose the median of $W_{\stackrel{\rightharpoonup}{r}}$ as the split point. We empirically evaluate the 
performance of uniform random split and split by median, and found their difference to be rather small.
Let $V=\{X_1,...,X_n\}$ denote the given data set. Let $\mathcal{W}$ denote the set of nodes to be split (termed as the {\it working set}). 
Let $n_s$ denote a constant such that a node will not be split further if its size is smaller than $n_s$. Denote the {\it rpForests} by $\mathcal{F}$; 
assume there are totally $T$ trees. The algorithm for generating {\it rpForests} is described as Algorithm~\ref{algorithm:rpForest}.
\begin{algorithm}
\caption{\it~~rpForests(V, T)}
\label{algorithm:rpForest}
\begin{algorithmic}[1]
\STATE Initialize $\mathcal{F} \gets \emptyset$; 
\FOR {$i=1$ to $T$}
\STATE Let $V$ be the root node of tree $t_i$; 
\STATE Initialize the working set $\mathcal{W} \leftarrow \{V\}$; 
\WHILE {$\mathcal{W}$ is not empty}
	\STATE Sample $W \in \mathcal{W}$ and update $\mathcal{W} \leftarrow \mathcal{W} \setminus \{W\}$; 
	\IF{$|W| < n_s$} 
		\STATE Skip to the next round of the loop; 
	\ENDIF 
    	\STATE Generate a random direction $\stackrel{\rightharpoonup}{r}$;  
	\STATE Project $W$ onto $\stackrel{\rightharpoonup}{r}$, $W_{\stackrel{\rightharpoonup}{r}}=\{P_{\stackrel{\rightharpoonup}{r}}(x): x \in W\}$; 
	\STATE Sample splitting point $c$ uniformly at random from the interval $[\min(W_{\stackrel{\rightharpoonup}{r}}), \max(W_{\stackrel{\rightharpoonup}{r}})]$; 
	\STATE Set $W_L \gets \{x: P_{\stackrel{\rightharpoonup}{r}}(x) < c\}$ and $W_R \gets \{x: P_{\stackrel{\rightharpoonup}{r}}(x) \geq c\}$; 
	\STATE Split node $W$ by $W = W_L \cup  W_R$; 
	\STATE Update working set by $\mathcal{W} \leftarrow \mathcal{W} \cup \{W_L, W_R\}$; 
\ENDWHILE
\STATE Add tree $t_i$ to the ensemble $\mathcal{F} \gets \mathcal{F} \cup \{t_i\}$;
\ENDFOR
\STATE return($\mathcal{F}$); 
\end{algorithmic}
\end{algorithm} 
Note that in choosing the splitting direction, we follow the {\it basic implementation} of {\it rpForests} as documented in \cite{rpForests2019} 
which would simply generate a random direction. 
\subsection{Similarity kernel and clustering}
In this subsection, we will describe algorithms for the generation of a similarity kernel with {\it rpForests} and for 
clustering with the resulting similarity kernel. 
\\
\\
Once {\it rpForests} is grown, the generation of the similarity kernel is fairly straightforward. For each tree, one scans through 
all the leaf nodes and 
collect information regarding if two points are in the same leaf node, and then aggregate such information from all trees in {\it rpForests}. For 
ease of description, let $S[A_1,A_2]$ denote all entries in matrix $S$ with their position indexed by the Cartesian product $A_1 \times A_2$ of 
two sets of integers $A_1$ and $A_2$. The generation of the similarity kernel is described as Algorithm~\ref{algorithm:rpfSimilarity}. Note that
here the notation, $\mathcal{N}$, for a node may also refer to the index of all points in this node for ease of description.
\begin{algorithm}
\caption{\it~~rpfSimilarity($\mathcal{F}$)} 
\label{algorithm:rpfSimilarity}
\begin{algorithmic}[1]
\STATE Initialize a similarity matrix $S \gets \bf{0}$; 
\FOR {each tree $t\in \mathcal{F}$}
	\FOR {each leaf node $\mathcal{N}\in t$}
	\STATE Increase the similarity count for each entry in $S[\mathcal{N},\mathcal{N}]$; 
	\ENDFOR
\ENDFOR
\STATE Set $S \gets S/(\mbox{number of trees in}~\mathcal{F})$; 
\STATE Return $S$; 
\end{algorithmic}
\end{algorithm} 
\\
\\
The similarity matrix $S$ as produced by Algorithm~\ref{algorithm:rpfSimilarity} is a valid kernel matrix. This can be argued 
as follows. Let $S^{(t)}$ denote the similarity matrix generated by tree $t$. Then $S^{(t)}$ is a block diagonal matrix 
with all points in the same leaf node form a diagonal block of all entries 1, and all off-diagonal blocks are 0 
as those correspond to points from different leaf nodes. Let matrix $M$ be one of the diagonal blocks in $S^{(t)}$.
Then $M$ is positive semidefinite, as the following holds
\begin{equation*}
z^T M z=(z_1+z_2+\cdots+z_m)^2 \geq 0
\end{equation*}
for any vector $z=(z_1,...,z_m)$. This implies that matrix $S^{(t)}$ is positive semidefinite. It follows that the average matrix 
\begin{equation*}
S=\frac{1}{T}\sum_{t=1}^T S^{(t)}
\end{equation*}
is also positive semidefinite. Thus the similarity matrix $S$ produced by {\it rpForests} is a valid kernel matrix. Due to its 
intimate connection to {\it rpForests}, the resulting kernel is termed as {\it rpf-kernel}.
\\
\\
On rpf-kernel $S$, it is straightforward that one can apply spectral clustering to obtain a clustering of the original data.  
This is described as Algorithm~\ref{algorithm:rpfCluster}.
\begin{algorithm}
\caption{\it~~rpfCluster($S$, K)} 
\label{algorithm:rpfCluster}
\begin{algorithmic}[1]
\STATE Threshold similarity kernel by $S_{ij} \gets 0$ if $S_{ij}<\beta_1$; 
\STATE $S \gets exp(S/\beta_2)$ for some bandwidth $\beta_2$;
\STATE Apply spectral clustering to $S$ to get the cluster assignment; 
\end{algorithmic}
\end{algorithm} 
Note that here we threshold the kernel $S$ following the same idea as \cite{CF}. This will help get rid of the spurious similarity 
between points in different clusters (in the ideal case, points from different clusters would have a 0 similarity if clustering
is concerned). Also similar as in the practice of using the Gaussian kernel in various kernel methods, we apply a bandwidth 
to reflect the correct scale at which the data are clustered.
\subsection{Spectral clustering}
\label{Sec:specCluster} 
In this subsection, we will briefly describe spectral clustering since it is used in the last stage of {\it rpfCluster} 
and also as a competing algorithm in our experiments (c.f. Section~\ref{section:competitors}). Spectral clustering works on the {\it affinity graph} 
$\mathcal{G} = (V, \mathcal{E})$ formed on the set of data points $V=\{X_1,...,X_n\}$, where each vertex corresponds to a data point and 
the edge, $S_{ij} \in \mathcal{E}$, encodes the affinity (similarity) between data points $X_i$ and $X_j$. The matrix $S=(S_{ij})_{i,j=1}^n$ 
is called the {\it affinity} or similarity matrix. 
\\
\\
There are several variants of spectral clustering~\cite{Ncut,NgJordan2002,KannanVempala2004}. In the present paper 
we adopt \emph{normalized cuts} (Ncut)~\cite{Ncut} for a concrete description (when the size of the dataset is big, e.g., 
larger than 2000, the Nystrom method \cite{NystromSpectral} is used to compute the eigen-decomposition); other formulations are 
also possible. A {\it graph cut} between two sets of vertices, $V_1, V_2 \subset V$, is the collection of all edges crossing 
$V_1$ and $V_2$, and the size of the cut is defined by 
\begin{equation*}
\mathcal{E}(V_1, V_2) = \sum_{X_i \in V_1, X_j \in V_2} S_{ij}.
\end{equation*}
\begin{figure}[htbp]
\centering
\begin{center}
\hspace{0cm}
\includegraphics[scale=0.34,clip]{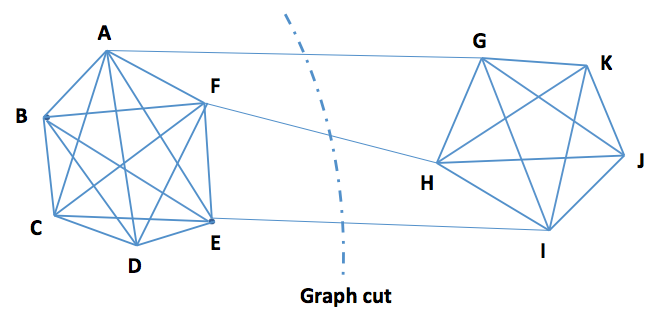}
\end{center}
\caption{\it Illustration of a graph cut.} 
\label{figure:graphCut}
\end{figure}
Figure~\ref{figure:graphCut} is an illustration of the graph cut.
The dashed line cuts the graph into two partitions, $V_1=\{A,B,C,D,E,F\}$ and $V_2=\{G,H,I,J,K\}$. 
The total weights of three edges, $AG, FH, EI$, crossing $V_1$ and $V_2$ is the size of the cut. It 
is easy to see that the concept of graph cut can be extended to graph partitions that involve multiple
sets. Spectral clustering aims to find a minimal normalized graph cut
\begin{equation} \label{eq:ncut}
\arg\min_{V_1,...,V_K}\sum_{j=1}^K \frac{\mathcal{E}(V_j, V) - \mathcal{E}(V_j, V_j)}
{\mathcal{E}(V_j, V)},
\end{equation}
where $(V_1, \ldots, V_K)$ is a partition of $V$.
Directly solving \eqref{eq:ncut} is intractable as it is an integer programming problem, and spectral clustering is based 
on a relaxation of \eqref{eq:ncut} into an eigenvalue problem. In particular, Ncut relaxes the indicator vectors 
(corresponding to cluster membership) with real values, resulting in a generalized eigenvalue problem.
\\
\\
Ncut computes the second eigenvector of the Laplacian of matrix $S$ to find a bipartition
of the data. The nonnegative components of the eigenvector corresponds to one partition and the rest to the other. 
The same procedure is applied recursively until reaching the number of specified clusters.
\section{Theoretical analysis}
\label{section:theory}
The growth of {\it rpForests} involves quite a bit of randomness, i.e., in the choice of splitting directions and the splitting point. 
A central concern would be the quality of the rpf-kernel learned by {\it rpForests}---will the learned similarity kernel preserve
the true similarity values? Our analysis will ascertain that ``far-away" (or dissimilar) points would have low similarity while high 
similarity for nearby (or similar) points. The behavior of nearby points under {\it rpForests} was analyzed in \cite{rpForests2019} 
which states that the probability that such points are separated during the growth of {\it rpForests} is small thus one would expect 
a high similarity among such points. Here we follow their analysis but focus on the behavior of far-away points, and, for tractability, 
we only consider the basic implementation of {\it rpForests} \cite{rpForests2019}. Note that here {\it far-away} or {\it nearby} are not 
precisely defined; a crude rule for {\it nearby} is that one point is a k-nearest neighbor of another while {\it far-away} is when the 
distance between two points is larger than a small constant, say $\alpha$, and both $k$ and $\alpha$ are application dependent. 
Throughout our analysis, we use the Euclidean distance; however, it should be clear that other properly defined distance metrics 
also apply.  
\begin{figure}[htbp]
\centering
\begin{center}
\hspace{0cm}
\includegraphics[scale=0.28,clip]{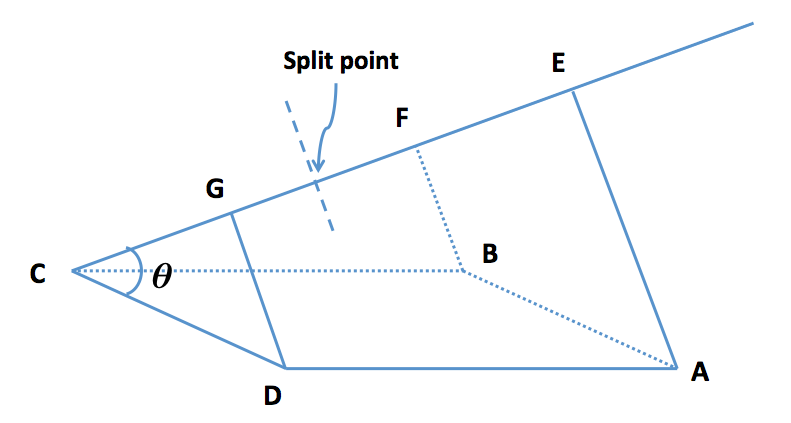}
\end{center}
\caption{\it Illustration of the geometry involved in the random projection (image courtesy \cite{rpForests2019}). $A$ and $B$ 
are the two points of interest. Line $CE$ indicates the direction of random projection. $E$ and $F$ are the projection of points $A$ 
and $B$ onto line $CE$. Line $CD$ is parallel to $AB$, and $G$ is the projection of $D$ onto line $CE$. The split
point will lie anywhere on line $CE$ within the range of of the projections of all points at a tree node. } 
\label{figure:knnProof}
\end{figure}
\\
\\
We will use Figure~\ref{figure:knnProof} to assist our analysis.  
Assume $A$ and $B$ 
are the two points of interest, and let line $CE$ denote the direction of the random projection. We can conveniently assume 
that $C$ is the point of origin. Point $D$ is chosen such that $CD \parallel AB$ and that $|AB|=|CD|$ where $|.|$ denotes 
the length of a line segment and $\parallel$ indicates parallel. Assume $G, E, F$ are the points 
projected from $D, A, B$ onto $CE$, respectively. It was shown that $|CG|=|EF|$ in \cite{rpForests2019}. 
\\
\\
For a given point set $\Omega$, it can be cut along different directions. \cite{rpForests2019} considers a direction along which the 
data stretch the least (the size of such a stretch is called the {\it neck size}). Here we shall consider its counterpart which is defined 
as follows. 
\\
\\
\textbf{Definition.} Let $\Omega$ be a set of points. Define the size of its {\it principal stretch} as
\begin{equation*}
\rho(\Omega) = \sup_{\stackrel{\rightharpoonup}{r}} \sup_{x_1,x_2 \in \Omega} \{ \mid P_ {\stackrel{\rightharpoonup}{r}}(x_1) - P_{\stackrel{\rightharpoonup}{r}}(x_2) \mid\}.
\end{equation*} 
The neck size was used to bound the probability that nearby points stay unseparated 
under {\it rpForests}. In the following, we will state a result that characterizes the probability that far-away points would be separated under {\it rpForests}.
\begin{theorem}
\label{thm:singleSplit}
Let $\Omega$ be a set of points with a positive principal stretch, i.e., $\rho(\Omega)>0$.  Given any two points, $A, B \in \Omega$, for a random 
projection defined by the rpTree, 
\begin{equation*}
P(\text{A and B separated by a random projection} ) \geq \frac{2d}{\pi \rho(\Omega)},
\end{equation*}
where $d$ denote the distance between points $A$ and $B$.
\end{theorem}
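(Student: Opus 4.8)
The plan is to condition on the random projection direction $\vec{r}$ and to reduce the separation event to a one-dimensional question about where the split point $c$ lands. First I would observe that, once $\vec{r}$ is fixed, the points $A$ and $B$ are separated exactly when the split point $c$ falls strictly between their projections $P_{\vec{r}}(A)$ and $P_{\vec{r}}(B)$, so that one of them is routed to $W_L$ and the other to $W_R$. Since $c$ is drawn uniformly from $[\min(W_{\vec{r}}), \max(W_{\vec{r}})]$ and both $A, B \in \Omega$, the favorable values of $c$ form a subinterval of length $|P_{\vec{r}}(A) - P_{\vec{r}}(B)|$ of the full projection interval, whose length is $\max(W_{\vec{r}}) - \min(W_{\vec{r}})$, i.e.\ the stretch of $\Omega$ along $\vec{r}$. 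Hence the conditional separation probability is precisely the ratio of these two lengths.

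Next I would bound the denominator and evaluate the numerator separately. By the definition of the principal stretch, the length of the full projection interval is at most $\rho(\Omega)$ for every direction $\vec{r}$, so replacing it by $\rho(\Omega)$ only shrinks the ratio and keeps us on the correct side of the desired lower bound; this also explains why the hypothesis $\rho(\Omega)>0$ is needed to keep the ratio well defined. For the numerator I would invoke the geometric identity $|CG|=|EF|$ of Figure~\ref{figure:knnProof}, established in \cite{rpForests2019}: writing $\theta$ for the angle between the segment $AB$ and the projection direction $CE$, and using that $CD \parallel AB$ with $|CD|=|AB|=d$ while $G$ is the foot of the perpendicular from $D$ onto $CE$, one obtains $|P_{\vec{r}}(A) - P_{\vec{r}}(B)| = |EF| = |CG| = d\,|\cos\theta|$. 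Combining the two estimates, the conditional separation probability is at least $d\,|\cos\theta|/\rho(\Omega)$.

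Finally I would remove the conditioning by averaging over the random direction. The separation probability equals the expectation over $\vec{r}$ of the conditional probability, hence it is bounded below by $\tfrac{d}{\rho(\Omega)}\,\mathbb{E}\,|\cos\theta|$. Treating the angle $\theta$ that the random direction makes with the fixed segment $AB$ as uniform on the circle, the elementary computation $\mathbb{E}\,|\cos\theta| = \tfrac{1}{2\pi}\int_0^{2\pi}|\cos\theta|\,d\theta = \tfrac{2}{\pi}$ then yields exactly the claimed bound $\tfrac{2d}{\pi\rho(\Omega)}$. As a consistency check I would note that taking $\vec{r}$ along $AB$ in the definition of $\rho$ forces $\rho(\Omega)\geq d$, so the bound never exceeds $\tfrac{2}{\pi}<1$ and is always a legitimate probability estimate.

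The step I expect to be the main obstacle is the last averaging: justifying that the relevant angle $\theta$ is uniformly distributed, so that $\mathbb{E}\,|\cos\theta| = 2/\pi$. This is transparent in the plane, where the geometry of Figure~\ref{figure:knnProof} naturally lives, but in ambient dimension $p>2$ the component of a uniform unit vector along a fixed direction is \emph{not} distributed as the cosine of a uniform angle, and $\mathbb{E}\,|\cos\theta|$ degrades with $p$. The crux of a fully rigorous argument is therefore to explain why it suffices to work in the two-dimensional plane spanned by $AB$ and the projection direction (or otherwise to reconcile the dimension dependence with the stated $2/\pi$ constant); the remaining steps are the routine geometric and measure-theoretic manipulations sketched above.
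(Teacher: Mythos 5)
Your proposal is correct and follows essentially the same route as the paper's proof: condition on the angle $\theta$ between the random projection direction and the segment $AB$, express the conditional separation probability as the ratio $|EF|/L = d\cos\theta/L$ using the identity $|EF|=|CG|$, bound the projection range $L$ above by $\rho(\Omega)$, and average over $\theta$ to obtain the factor $2/\pi$. The ``main obstacle'' you flag at the end is a legitimate concern but is not resolved in the paper either---the paper simply takes $\theta$ uniform on $[0,\pi/2]$ with density $2/\pi$, which is the two-dimensional computation; for a direction uniform on the sphere in ambient dimension $p>2$ the angle to a fixed segment has density proportional to $\sin^{p-2}\theta$ and the constant would degrade with $p$, so your reservation applies equally to the published argument.
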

\begin{proof}
The proof is based on the geometry illustrated in Figure~\ref{figure:knnProof}. 
The main idea is to condition on the angle, $\theta$, between the random direction $CE$ and line $AB$. Note that this is also 
the angle between the random direction and $CD$ as lines $AB \parallel CD$. It is clear that $0 \leq \theta \leq \pi/2$.
\\
\\
Given angle $\theta$, points $A$ and $B$ are separated if the splitting point is within their projections onto the random direction.
That is, the randomly selected splitting point falls between $E$ and $F$.
Let $L$ denote the size of the range of the projected points. That is, 
\begin{equation*}
L = \sup_{x_1,x_2 \in \Omega} \{ \mid P_ {\stackrel{\rightharpoonup}{r}}(x_1) - P_{\stackrel{\rightharpoonup}{r}}(x_2) \mid\}.
\end{equation*} 
As the splitting point is chose uniformly at random along the random direction, the probability that $A$ and $B$ are split
is given by $|EF|/L$. As $|EF|=|CG|=|AB| cos(\theta)$,
\begin{eqnarray*}
&& P(\text{A and B separated by a random projection} ) \\
&=& \int_{0}^{\pi/2}  P\left(\text{A and B are separated} \mid \theta \right) \cdot \frac{2}{\pi} d\theta \\
&=& \int_{0}^{\pi/2} \frac{|AB| cos(\theta)}{L} \cdot \frac{2} {\pi} d\theta \\
&\geq& \frac{2|AB|} {\pi \rho(\Omega)} \int_{0}^{\pi/2} cos(\theta)  d\theta 
= \frac{2d} {\pi \rho(\Omega)}. 
\end{eqnarray*}
\end{proof}
\noindent
Now assume a tree $t$ in {\it rpForests} splits $J_t \in [J_1, J_2]$ times, and 
the principal stretch of all the child nodes shrinks by a factor in the range of $[\gamma_1, \gamma_2] \subset (0,1)$. 
Then, for any two far-away points, $A$ and $B$, the probability that they would be separated in tree $t$ can be 
estimated as follows.
\\
\\
$A$ and $B$ are separated if they are separated at any one of the node splits. Let $N_t^{(1)}, N_t^{(2)}, \cdots, N_t^{(J_t)}$ 
indicate the collection of nodes in the growth of tree $t$ such that each node in the sequence is either the sibling or the 
descendent of proceeding nodes. Then 
\begin{eqnarray*}
&& P(\text{A and B separated in tree $t$}) \\
&=&  P\left(\text{$\cup_{i=1}^{J_t}$ (A and B separated when splitting node $N_t^{(i)}$} ) \right) \\
&\geq& P(\text{A and B separated when splitting node $N_t^{(J_t)}$}). 
\end{eqnarray*}
Now we can apply Theorem~\ref{thm:singleSplit} to node $N_t^{(J_t)}$, and get
\begin{equation}
P(\text{A and B separated in tree $t$}) \geq \frac{2d_{AB}} {\pi \rho(N_t^{(J_t)}) } 
\geq \frac{2d_{AB}} {\pi \rho\gamma_2^{J_1-1}}
\label{eq:lowerBoundSep}
\end{equation}
where $\rho$ is the size of the principal stretch over the entire data. One implication of \eqref{eq:lowerBoundSep} is that as 
the enclosing node shrinks, eventually two far-away points will be separated in probability. Under similar conditions, the result 
established in \cite{rpForests2019} implies that, for nearby points $A$ and $B$,
\begin{equation}
P(\text{A and B separated in tree $t$}) 
\leq \frac{2d_{AB}} {\pi \nu}  \frac{1}{\gamma_1^{J_2-2}(1-\gamma_1)}
\label{eq:upperBoundSep}
\end{equation}
where $\nu$ is the neck size of the entire data. 
Bound \eqref{eq:upperBoundSep} indicates that the probability of separation for ``nearby" points remains small in tree $t$, since 
the near neighbor distance decreases very quickly for large set of data \cite{BickelYan2008,PenroseYukich2010}.  
Bounds \eqref{eq:lowerBoundSep} and \eqref{eq:upperBoundSep} will allow us to characterize important properties of the similarity
kernel learned by {\it rpForests}.
\\
\\
Now let us consider the similarity value for points $A$ and $B$ in the rpf-kernel produced by {\it rpForests} which consists of $T$ trees.
For $j=1,2,...,T$, define indicator random variables
\begin{equation*}
       \mathbb{I}_j~=\left\{
            \begin{array}{ll}
                1, & ~\mbox{if A and B separated in j-th tree}~\\
                0, & ~\mbox{otherwise.}
            \end{array}\right.
\end{equation*}
Then $\mathbb{I}_j$'s are independent and of the same distribution {\it conditional} on the data. Thus 
$\Delta=\mathbb{I}_1+\mathbb{I}_2+...+\mathbb{I}_T$ gives the number of trees for which $A$ and $B$ are separated. 
It is clear that $\Delta$ follows a binomial distribution or is approximately normal if $T$ is large. Assume $\mathbb{I}_j$ 
has a mean $\mu$ and standard deviation $\sigma$. For ``far-away" points $A$ and $B$, we 
wish to show that, with high probability, their similarity is smaller than a small value (not arbitrary small), say $\delta$. 
We have the following normal approximation
\begin{eqnarray}
&& P(\mbox{$A$ and $B$ have a similarity at most $\delta$}) \nonumber\\
&=& P(\Delta \geq (1-\delta) T) \nonumber\\
&\approx& P(Z \geq (1-\delta-\mu) \sqrt{T}/\sigma) \label{eq:probZ}
\end{eqnarray}
where $Z$ is the standard normal random variable. We wish to argue that $1-\delta-\mu<0$ for some small $\delta>0$. Then, 
if $T$ is large, the probability in \eqref{eq:probZ} will be high since then $(1-\delta-\mu) \sqrt{T}/\sigma$ will be very far towards $-\infty$.
By \eqref{eq:lowerBoundSep}, 
\begin{equation}
\mu = P(\mbox{$A$ and $B$ separated in a tree}) \geq 2d_{AB} /  (\pi \rho\gamma_2^{J_1-1}). \label{eq:uInequ}
\end{equation} 
If the number of data points $n$ increases, then the number of splits $J$ will also increase. As long as the principal stretch size of the 
nodes shrinks steadily, that is, $\gamma_2$ is strictly less than 1. Thus $\mu$ will be close to 1, and would satisfy $1-\delta-\mu<0$ for
far-away points $A$ and $B$.
Therefore, with high probability, the similarity of far-away points $A$ and $B$ will stay below some small $\delta$.
\\
\\
For nearby points $A$ and $B$, we can follow a similar argument, i.e., \eqref{eq:upperBoundSep}, and conclude that, with high probability, 
their similarity will be large. Thus, we have argued that the rpf-kernel produced by {\it rpForests} has the desirable property: {\it far-away 
(dissimilar) points have low similarity while high similarity for nearby (similar) points}. In Section~\ref{sec:expExamples}
we will provide a toy example to demonstrate such a property.
\section{Related work}
\label{section:related}
Similarity plays a very important role in machine learning. Due to the intimate connection between similarity and metric 
learning, we do not distinguish the two in this section. A simple similarity measure is typically defined through a closed-form 
function such as the cosine (or weighted cosine along principal directions \cite{ShahabiYan2003}), 
Euclidean (or Minkoski) distance function etc. More sophisticated is the bilinear 
similarity which measures the similarity of any two objects $x_1, x_2 \in \mathbb{R}^p$ by a bilinear form
\begin{equation*}
f_S(x_1,x_2) = (x_1-x_2)^T S (x_1-x_2)
\end{equation*}
where $S$ is a symmetric positive semidefinite matrix. This allows to incorporate feature weights or feature 
correlations into the similarity, and the Mahalanobis distance \cite{Anderson1958} is a classic example. Indeed many research in similarity learning
starts from an early work on learning the {\it Mahalanobis distance} with side information such as examples of similar pairs of 
objects \cite{XingNgJordanRussell2002}. There are many followup work along this line, for example, \cite{SchultzJoachims2003}
regularizes the learning by large margin, \cite{Bar-Hillel2005} uses side-information in the form of equivalence constraints, \cite{GoldbergerHRS2005} 
attempts to optimize leave-one-out error of a stochastic nearest neighbor classifier, \cite{DavisKJSD2007} minimizes the relative 
entropy between two Gaussians under constraints on the distance function, \cite{WeinbergerSaul2009} learns a large-margin nearest 
neighbor metric such that k-nearest neighbors are of the same class while different classes otherwise. Later work 
either adds more constraints (such as sparsity) or extends the Mahalanobis metric, or on broader classes of problems (such as ranking);
this includes \cite{ChechikBengio2010, Hirzer2012, LimLanckriet2014, LiuMJC2015, LiuBelletSha2015, KangPengCheng2017}. Similarity 
or metric learning is a big topic, and it is beyond our scope to have a more thorough review on existing work; readers can refer to
surveys \cite{YangJin2006,Kulis2012, BelletHabrardSebban2014, MoutafisLK2017} and references therein. Existing work are 
almost exclusively supervised or weakly supervised in nature, our work is different in that it is unsupervised.
\\
\\
Another related topic is clustering ensemble \cite{StrehlGhosh2002, GionisMannila2005, BertoniValentini2005, CF}. Clustering ensemble 
works by first generating many clustering instances, and then produce the final cluster by aggregating results from individual clustering instances. 
Literature on clustering ensemble is huge; readers can refer to \cite{StrehlGhosh2002, FernBrodley2003, CF, BoongoenIamon2018} and references 
therein. Clustering ensemble is related as each tree in {\it rpfCluster} can be viewed as an instance of clustering where points in the 
same leaf node form a cluster. Most closely related are random projection based clustering ensemble \cite{FernBrodley2003} and 
Cluster Forests (CF) \cite{CF}. \cite{FernBrodley2003} generates individual clustering instances by random projection of the original 
data onto some low dimensional space and then perform clustering. {\it rpfCluster} differs by generating a clustering instance through 
the growth of a tree which iteratively refines the clustering, or, {\it rpfCluster} projects the data onto low dimensional spaces by a series 
of random projections with each to a one-dimension space. Same as {\it rpfCluster}, CF also iteratively improves clustering instances, 
and produces the final cluster by spectral clustering on the learned similarity kernel; the difference is that CF generates clustering 
instances by a base clustering algorithm and refines each clustering instances by randomized feature pursuits.
\\
\\
Finally there are connections to Random Forests (RF) \cite{RF}. Both RF and {\it rpfCluster} generates ensemble of trees with random ingredients; 
the difference is that RF is {\it supervised} as tree growth is guided by class labels while {\it rpfCluster} grows trees {\it unsupervisedly},
also RF splits on coordinates while {\it rpfCluster} on random projections. 
Both {\it rpfCluster} and CF can be viewed as {\it unsupervised extensions} to RF. CF aims at clustering by ensemble of iteratively 
refined clustering instances through randomized feature pursuits (see also \cite{BickelNadler2018}) while {\it rpfCluster} learns the 
similarity kernel by ensemble of random 
projection trees. Another connection is that RF can run in unsupervised mode \cite{RF,ShiHorvath2006} to learn a suitable distance metric 
for clustering; this is done by synthesizing a contrast pattern through randomization of the original data by randomly permuting the data
along each of its features thus breaking the covariance structure of the data (implemented by the {\it proximity} option in the R package ``randomForest"), 
which is fundamentally different from our approach. 
\section{Experiments}
\label{section:evaluation}
Our experiments consists of two parts. In the first part, we will give illustrative examples to help the readers better appreciate 
the desired property of the rpf-kernel learned by {\it rpForests}, and its relevance for clustering. In the second part, 
we evaluate the empirical clustering performance of {\it rpfCluster} and compare it to three competing clustering algorithms, including
K-means clustering \cite{HartiganWong1979} as the baseline, the NJW algorithm \cite{NgJordan2002} as a popular implementation
of spectral clustering, and CF \cite{CF} as state-of-the-art ensemble clustering algorithm, on a wide variety of real 
datasets. The two parts are presented in Section~\ref{sec:expExamples} and 
Section~\ref{sec:expRealData}, respectively.
\subsection{Illustrative examples}
\label{sec:expExamples}
In this section, we will provide two illustrative examples. One serves to illustrate the desired property of similarity 
kernel learned by {\it rpForests}, and the other to demonstrate the propagation of similarity through 
points in the same cluster.
\subsubsection{Example on similarity kernel}
To appreciate the desirable property of the rpf-kernel produced by {\it rpForests}, we will use a popular yet simple 
dataset, the {\it Iris flower} data. It was introduced by one of the founders of modern statistics, 
R. A. Fisher, in 1936 \cite{Fisher1936} for discriminant analysis, and has since become one of the most widely used datasets. 
The data consist of three species of Iris, {\it Iris setosa}, {\it Iris versicolor}, and {\it Iris virginica}, with 50 instances each 
on four features, the length and width of the sepals and petals, respectively.
\\
\begin{figure}[htbp]
\centering
\begin{center}
\hspace{0cm}
\includegraphics[scale=0.35,clip]{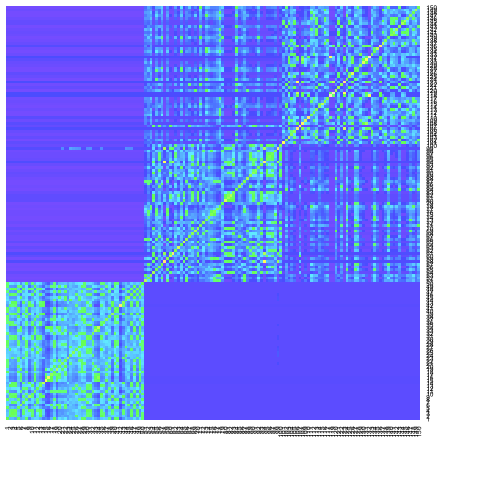}
\includegraphics[scale=0.35,clip]{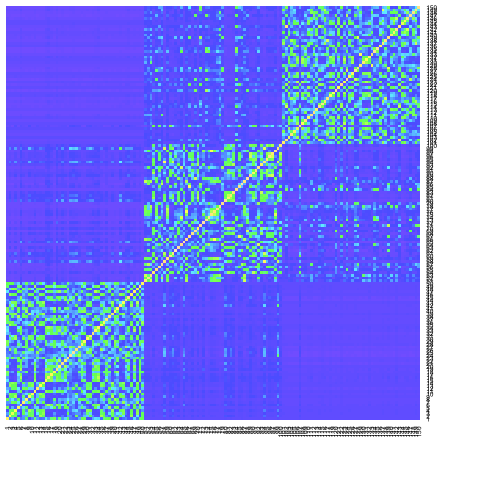}
\end{center}
\abovecaptionskip=-10pt
\caption{\it Heatmap of the similarity matrix generated by the Gaussian kernel (left) and by {\it rpForests} (right), respectively. } 
\label{figure:simIris}
\end{figure}
\\
\\
Figure~\ref{figure:simIris} shows the heatmap of the similarity matrix generated by the Gaussian kernel and by {\it rpForests}, 
respectively. 
It can be seen that, in both cases, the similarity between data points in the same species (i.e., the diagonal blocks) are higher 
than otherwise. In particular, the similarities are close to 0 between the {\it Iris setosa} and the {\it Iris virginica}. This is expected. 
However, the contrast between other diagonal and non-diagonal blocks by {\it rpForests} is much sharper than those by the Gaussian 
kernel. We attribute this to the Gaussian kernel as a sole function of the distance between points which is Euclidean and every 
feature is equally weighted, and further the potential smoothing effect of the Gaussian kernel. In contrast, the kernel 
formed by {\it rpForests} would be a result of both the distance between points and their neighborhood thus is able to take advantage 
of the {\it geometry} in the data. Further work is being conducted to understand these.
\\
\\
We then run spectral clustering algorithm on the rpf-kernel generated
by {\it rpForests}, and obtain a clustering and co-cluster accuracy (see definition in Section~\ref{section:metrics}) of 96.67\% and  
94.95\%, respectively. This is at the level of {\it classification} by some best classifiers 
such as RF despite that {\it rpfCluster} is unsupervised learning. The clustering and co-cluster accuracy 
on the Gaussian kernel are noticeably inferior, which are 92.00\% and 90.55\%, respectively.
\subsubsection{Example on clustering}
\label{sec:numeric}
When adopting the rpf-kernel produced by {\it rpForests} for clustering, one might notice that two points in the same cluster may 
have a low similarity (even though rpf-kernel is able to pick up structural information from the data) if they are far away from each 
other (since these two points are far away, likely they would be put into different
leaf nodes by {\it rpForests} thus a low value of similarity in the resulting similarity kernel); for example, two points that are located at 
the opposite ends of the cluster. However, this will {\it not} cause a problem for the subsequent spectral clustering which is built on local 
similarity. One empirical evidence is various algorithms for the speeding up of spectral clustering by sparsifying the similarity matrix based 
on k-nearest neighbors where similarity between non-kNNs are truncated to be 0. 
\\
\\
Here we supply a simple numerical example for illustration: far-away points will not be assigned to different clusters by spectral clustering 
as long as they are inside a region where all points has high similarity to their near neighbors. Assume there are 9 points, $X_{1-9}$, on a line 
which form two clusters, $X_{1-4}$ and $X_{5-9}$, respectively. Assume, for each point, only its immediate neighbors have a non-zero similarity; 
further assume the similarity between $X_4$ and $X_5$ (from different clusters) is 0.3. The similarity matrix is given by
\[
\small
A = \left[
\begin{array}{cccc;{2pt/2pt}ccccc}
1.0  &0.9  &0.0  &0.0  &0.0  &0.0  &0.0  &0.0  &0.0\\
0.9  &1.0  &0.9  &0.0  &0.0  &0.0  &0.0  &0.0  &0.0\\
0.0  &0.9  &1.0  &0.9  &0.0  &0.0  &0.0  &0.0  &0.0\\
0.0  &0.0  &0.9  &1.0  &0.3  &0.0  &0.0  &0.0  &0.0\\ \hdashline[2pt/2pt]
0.0  &0.0  &0.0  &0.3  &1.0  &0.9  &0.0  &0.0  &0.0\\
0.0  &0.0  &0.0  &0.0  &0.9  &1.0  &0.9  &0.0  &0.0\\
0.0  &0.0  &0.0  &0.0  &0.0  &0.9  &1.0  &0.9  &0.0\\
0.0  &0.0  &0.0  &0.0  &0.0  &0.0  &0.9  &1.0  &0.9\\
0.0  &0.0  &0.0  &0.0  &0.0  &0.0  &0.0  &0.9  &1.0
\end{array} \right]
\normalsize
\] 
The eigenvector used for spectral clustering (normalized cuts \cite{Ncut}) is given by
\small
\begin{equation*}
\label{eq:2ndEigenvector} [0.340~ 0.469~
0.392~ 0.219~ -0.098~ -0.249~ -0.351~
-0.419~ -0.304]^T,
\end{equation*}
\normalsize
which gives the expected clustering, i.e., positive components correspond to one cluster and the rest the other cluster.
\subsection{Experiments on real datasets}
\label{sec:expRealData}
A wide range of real data are used for performance assessment. This includes 11 benchmark datasets taken from the UC Irvine 
Machine Learning Repository \cite{UCI}, namely, Soybean, SPECT Heart, image segmentation (ImgSeg), Heart, Wine, 
Wisconsin breast cancer (WDBC), robot execution failure (lp5), Madelon, Musk, Naval Plants, and the Magic 
Gamma (mGamma) dataset, as well as a remote sensing dataset (RS) \cite{rsDiagnosis2019}, totally 12 datasets. For WDBC, 
it is standardized on its \{5, 6, 25, 26\}-th features; the Wine dataset is standardized on its \{6, 14\}-th features; for the Naval 
Plants data, we treat any record of measurements as requiring maintenance if both the q3Compressor and q3Turbine variables 
are above their median values thus converting the original 
numerical values into categorical labels. A summary of the datasets is given in Table~\ref{table:datasets}. Note that all datasets come with 
labels. We made such a choice by recognizing that the ultimate goal of clustering is to get the membership of all the points right; 
many existing metrics for evaluating clustering algorithms are often a surrogate of this due to the lack of true labels. 
We will compare the performance by {\it rpfCluster} and three competing algorithms on two different performance metrics.
\begin{table}[htbp]
\begin{center}
\begin{tabular}{rrrr}
\hline
Dataset          & Features     &  Classes   &  \#Instances\\
\hline
Soybean          & 35              & 4             & 47\\
SPECT            & 22              & 2             & 267\\
ImgSeg           & 19              & 7             & 2100\\
Heart            & 13              & 2             & 270\\
Wine             & 13              & 3             & 178\\
WDBC             & 30              & 2             & 569\\
Robot            & 90              & 5             & 164\\
Madelon          & 500             & 2             & 2000 \\
RS			&56				&7		&3303\\
Musk		&166		 &2			&6598\\
NavalPlants     &16			&2			&11934\\
mGamma	    &10			&2			&19020\\
\hline
\end{tabular}
\end{center}
\caption{\it A summary of datasets.} \label{table:datasets}
\end{table}
\subsubsection{Performance metrics}
\label{section:metrics}
Two different performance metrics are used; these are adopted from \cite{CF}. One is the clustering accuracy, 
and the other is the co-cluster accuracy. Having different performance metrics allows to assess a clustering 
algorithm from different perspectives since one metric may favor certain aspects while overlooking others. Using the 
clustering or co-cluster accuracy has the advantage of closely aligning to the ultimate goal of clustering---assigning
data points to proper groups---while other metrics are often a surrogate of the cluster membership.
In the following, we formally define the two performance metrics.
\\
\\
\textbf{Definition.} Let $\mathcal{L}=\{1,2,...,l\}$ denote the set of class labels, and
$h(.)$ and $\hat{h}(.)$ the true label and the label obtained by a
clustering algorithm, respectively. The {\it clustering accuracy} is
defined as
\begin{equation}
\label{clusterAccuracy} \rho_c(\hat{h})=\max_{\tau \in \Pi_{\mathcal{L}}}
\left\{\frac{1}{n}\sum_{i=1}^n \mathbb{I}\{\tau
\left(h(X_i)\right)=\hat{h}(X_i)\}\right\},
\end{equation}
where $\mathbb{I}$ is the indicator function and $\Pi_{\mathcal{L}}$ is the set of all permutations on the label 
set $\mathcal{L}$. It measures the fraction of labels given by a clustering algorithm that agree with the true labels 
(or labels come with the dataset; we call these generally as reference labels for simplicity of description) up to a 
permutation of the labels. This is a natural extension of the classification 
accuracy (under 0-1 loss) and has been used by many work in clustering 
\cite{XingNgJordanRussell2002,MeilaShortreed2005, YanHuangJordan2009}.
\\
\\
\textbf{Definition.} The {\it co-cluster accuracy} is defined by
\begin{equation*}
\rho_r=\frac{\mbox{Number of correctly clustered pairs}}{\mbox{Total
number of pairs}} 
\end{equation*}
where by {\it correctly clustered pair} we mean two data points, determined to be in the same cluster by a clustering 
algorithm, are also in the same cluster according to the reference labels. Both the clustering accuracy and the co-cluster
accuracy are converted to a scale of 100\%.
\subsubsection{Competing methods}
\label{section:competitors}
We compare {\it rpfCluster} to three other clustering algorithms---K-means clustering, the NJW algorithm, and CF. Among 
these, K-means clustering is one of the most widely used clustering algorithms, NJW is a popular implementation of spectral 
clustering (commonly acknowledged as the class of best clustering algorithms), and CF is a state-of-the-art clustering ensemble 
algorithm. The comparison can also be viewed on different similarity kernels. The similarity kernel in CF is learned by randomized 
feature pursuits in individual clustering instances and then aggregate. In NJW, the Gaussian kernel is used by default.
For the rest of this section, we will briefly describe K-means clustering, NJW and 
CF followed by a description on their implementations and parameters. 
\\
\\
$K$-means clustering \cite{HartiganWong1979, lloyd1982} seeks to find a partition of the data into $K$ subsets 
$S_1, S_2, ..., S_K$ such that the within-cluster sum of squares is minimized.
Directly solving the problem 
is NP-hard. It is often implemented by an iteration of a two-step procedure. 
Starting with a set of randomly chosen initial cluster centroids, it iterates between: 1) assigning all the data points to their closest 
centroid (data points associated with the same centroid forms a cluster); and 2) recalculating the cluster centroid for data points 
in the same cluster, until the changes in the within-cluster sum of squares is small. Such a procedure would converge to local
optima, and repeating this procedure for a number of times typically leads to fairly satisfactory results. K-means clustering is
widely used in practice, as it is simple to implement and computationally fast.
\\
\\
The NJW algorithm \cite{NgJordan2002} is a popular variant of spectral clustering. It works on the eigen-decomposition 
of the Laplacian of the similarity matrix over the data; the Gaussian kernel is used in the similarity measure which is defined as 
\begin{equation*}
S_{ij}=exp(-||X_i-X_j||^2/(2\sigma^2)),
\end{equation*}
where $\sigma$ is the bandwidth. Then it embeds the original data points into 
a space spanned by the top few eigenvectors, followed by a normalization of the resulting embedding to the unit sphere. 
K-means clustering is then performed to get the cluster membership assignment.
\\
\\
As mentioned in Section~\ref{section:related}, CF \cite{CF} is a clustering ensemble algorithm. Each clustering instance in 
CF is generated by randomized feature pursuit according to the $\kappa$ criterion \cite{CF} which is the ratio of the within-cluster 
and between-cluster sum of squared distances. It starts with a randomly selected features, then generates several sets of candidate 
features to run by a base clustering algorithm (e.g., K-means clustering as implemented in \cite{CF}) and keep the set of features 
that lead to the maximum decreases in the $\kappa$ value. This procedure is repeated until changes to the $\kappa$ value become 
too small. On each clustering instance a co-cluster indicator matrix is calculated, and the indicator matrices are then averaged to get 
the similarity matrix. Then similar steps as {\it rpfCluster} are executed to get the final clustering assignment. For details about its 
parameter settings, please refer to \cite{CF} from which we also adopt some experimental results in our comparison.
\\
\\
For $K$-means clustering, the R package {\it kmeans()} was used with the ``Hartigan-Wong" \cite{HartiganWong1979} initialization, 
and the two parameters $(n_{it}, n_{rst})$, which stands for the maximum number of iterations and the number of restarts during each 
run, respectively, are set to be $(1000,100)$. For NJW, function {\it specc()} of the R package ``kernlab" 
\cite{kernlab} was used with the Gaussian kernel and an automatic search of the local bandwidth parameters. 
The number of trees in {\it rpForests} are chosen from $\{200,400,600\}$ and the difference in results is very small, the node 
splitting constant $n_s$ is $30$ except for $12$ for Soybean and $200$ for Madelon, the threshold level $\beta_1$ 
is chosen from $\{0, 0.1, 0.2, 0.3, 0.4\}$, and the step size for the search of bandwidth $\beta_2$ is 0.01 within (0,1] while 0.1 over (1,200]. 
\subsubsection{Experimental results}
\label{section:expUCI} 
The evaluation is based on two different performance 
metrics for, $\rho_c$ and $\rho_r$, as defined in Section~\ref{section:metrics}.
The results under $\rho_c$ and $\rho_r$ are shown as Figure~\ref{figure:comparisonsC} 
and Figure~\ref{figure:comparisonsR}, respectively. 
\begin{figure}[htp]
\centering
\begin{center}
\hspace{0cm}
\includegraphics[scale=0.5,clip]{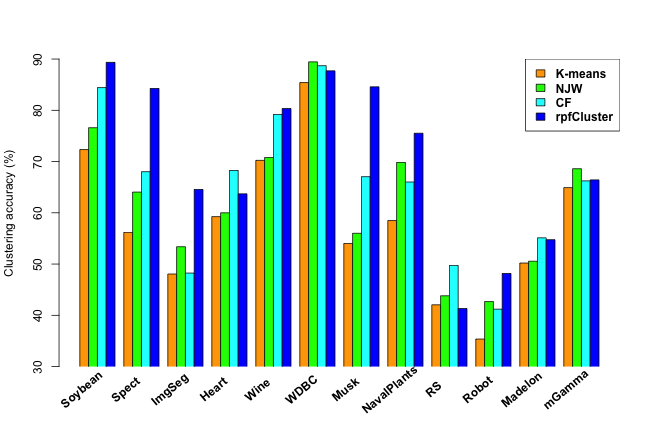}
\end{center}
\abovecaptionskip=-15pt
\caption{\it Comparison between K-means clustering, spectral clustering (NJW),
CF, and rpfCluster for clustering accuracy $\rho_c$. } 
\label{figure:comparisonsC}
\end{figure}
\begin{figure}[htp]
\centering
\begin{center}
\hspace{0cm}
\includegraphics[scale=0.5,clip]{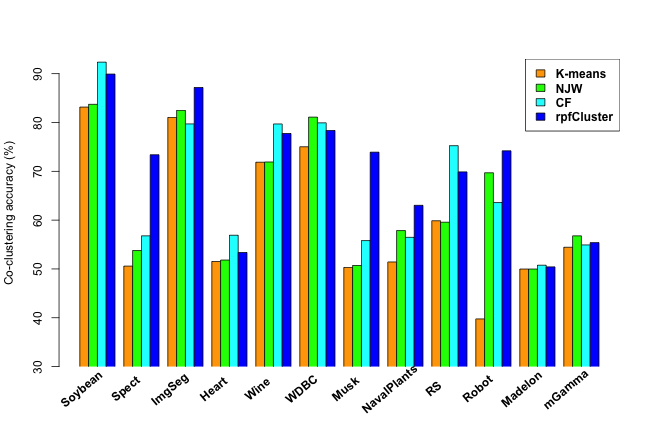}
\end{center}
\abovecaptionskip=-15pt
\caption{\it Comparison between K-means clustering, spectral clustering (NJW),
CF, and rpfCluster for co-clustering accuracy $\rho_r$. } 
\label{figure:comparisonsR}
\end{figure}
\\
\\
On all but two of the 12 datasets, either {\it rpfCluster} or CF is leading
with {\it rpfCluster} having an edge. Under clustering accuracy $\rho_c$, {\it rpfCluster} is leading on 7 datasets, and ranks 
the second on 3 datasets while CF leads on 3 and seconds on 5. For co-cluster accuracy $\rho_r$, {\it rpfCluster} leads on 
5 datasets and seconds on 6 while CF leads on 5 and seconds on 3. Overall, {\it rpfCluster} outperforms CF (also NJW
and K-means clustering) on both two performance metrics. 
\section{Conclusions}
\label{section:conclusion}
We have proposed an effective approach for the unsupervised learning of a similarity kernel by {\it rpForests}. Our approach 
combines the power of ensemble methodology and the flexibility of trees. It is simple to implement, and readily adapt to the 
geometry of the underlying data. Our theoretical analysis reveals highly desirable property of the learned rpf-kernel: {\it far-away points 
have low similarity while high similarity for nearby points}, and the similarities have a native interpretation as the probability of 
points staying in the tree leaf nodes during the growth of {\it rpForests}. The learned rpf-kernel is readily incorporated into 
our clustering algorithm {\it rpfCluster}. On a wide variety of real and benchmark datasets, {\it rpfCluster} compares favorably to 
spectral clustering and a state-of-the-art 
clustering ensemble algorithm. Given the desirable theoretical property and the highly competitive empirical 
performance on clustering, we expect rpf-kernel to be applicable to other problems of an unsupervised nature or as a 
regularizer in supervised or weakly supervised settings. As each projection during the tree growth can be viewed as a coordinate, 
effectively {\it rpForests} is exploring some low dimensional manifolds (of a dimension approximately $O(\log(n))$) in high dimensional data, 
thus we expect {\it rpForests} to be also useful in manifold learning \cite{Cayton2008,HuoNiSmith2007}.       

\end{document}